\documentclass{article}



\usepackage[preprint, nonatbib]{nips_2018}



\usepackage[T1]{fontenc}    
\usepackage[utf8]{inputenc} 
\usepackage{nag}
\usepackage{amsfonts}       
\usepackage{amsmath}
\usepackage{amssymb}
\usepackage{amsthm}
\usepackage{booktabs}       
\usepackage{graphicx}
\usepackage{hyperref}       
\usepackage{microtype}      
\usepackage{nicefrac}       
\usepackage{url}            
\usepackage{xargs}
\usepackage{empheq}
\usepackage{algorithm, algorithmicx}
\usepackage[noend]{algpseudocode}
\usepackage[pdftex,dvipsnames]{xcolor}  
\usepackage{wrapfig}
\usepackage{subcaption}
\usepackage{tikz}
\usepackage{caption}
\usepackage{varwidth}
\usepackage{appendix}
\usepackage{setspace}
\usepackage{mdframed}
\usepackage{backnaur}
\usepackage{pgf}

\newmdtheoremenv[style=2]{mddef}{Definition}
\newmdtheoremenv[style=2]{mdtheo}{Theorem}
\newmdtheoremenv[style=2]{mdlem}{Theorem}
\newmdtheoremenv[style=2]{mdcorr}{Corollary}
\newmdtheoremenv[style=2]{mdprop}{Proposition}
\newmdtheoremenv[style=2]{mdprune}{Pruning Rule}
\newmdtheoremenv[style=proofstyle]{mdproof}{Proof}

\newenvironment{Figure}
  {\par\medskip\noindent\minipage{\linewidth}}
  {\endminipage\par\medskip}

\usepackage[colorinlistoftodos,prependcaption,textsize=tiny]{todonotes}
\newcommandx{\unsure}[2][1=]{\todo[linecolor=red,backgroundcolor=red!25,bordercolor=red,#1]{#2}}
\newcommandx{\change}[2][1=]{\todo[linecolor=blue,backgroundcolor=blue!25,bordercolor=blue,#1]{#2}}
\newcommandx{\info}[2][1=]{\todo[linecolor=OliveGreen,backgroundcolor=OliveGreen!25,bordercolor=OliveGreen,#1]{#2}}
\newcommandx{\improvement}[2][1=]{\todo[linecolor=Plum,backgroundcolor=Plum!25,bordercolor=Plum,#1]{#2}}
\newcommandx{\thiswillnotshow}[2][1=]{\todo[disable,#1]{#2}}

\usepackage{scalerel,stackengine}
\stackMath
\newcommand\reallywidehat[1]{%
\savestack{\tmpbox}{\stretchto{%
  \scaleto{%
    \scalerel*[\widthof{\ensuremath{#1}}]{\kern.1pt\mathchar"0362\kern.1pt}%
    {\rule{0ex}{\textheight}}
  }{\textheight}%
}{2.4ex}}%
\stackon[-6.9pt]{#1}{\tmpbox}%
}

\newcommand{\approptoinn}[2]{\mathrel{\vcenter{
  \offinterlineskip\halign{\hfil$##$\cr
    #1\propto\cr\noalign{\kern2pt}#1\sim\cr\noalign{\kern-2pt}}}}}

\newcommand{\appropto}{\mathpalette\approptoinn\relax}

\theoremstyle{definition}
\newtheorem{example}{Example}

\newtheorem{theorem}{Theorem}
\newtheorem{definition}{Definition}
\newtheorem{corollary}{Corollary}

\newtheorem{lemma}{Lemma}
\renewcommand\vec{\mathbf}

\DeclareMathAlphabet{\mathpzc}{OT1}{pzc}{m}{it}
\newcommand{\eqdef}{\mathrel{\stackrel{\makebox[0pt]{\mbox{\normalfont\tiny def}}}{=}}}



\newcommand{\ignore}[1]{}

\newcommand{\setof}[1]{\ensuremath{\left\{#1\right\}}}

\DeclareMathOperator*{\argmax}{arg\,max}


\newcommand{\mypara}[1]{\noindent{\bf #1.}}

\newcommand{\Nat}{\mathbb{N}}

\newcommand{\Prob}{\Pr}
\newcommand{\Expected}{\mathop{\mathbb{E}}}

\newcommand{\true}{\mathit{true}}
\newcommand{\false}{\mathit{false}}

\newcounter{myctr}



\newcommand{\pOrder}{\trianglelefteq}

\newcommand{\Bernoulli}{\mathcal{B}}
\newcommand{\KL}{D_{KL}}
\newcommand{\Demos}{X}

\setlength{\textfloatsep}{10pt plus 1pt minus 2pt}
\setlength{\dbltextfloatsep}{3pt}
\setlength{\intextsep}{5pt}
\setlength{\parskip}{4pt}
\setlength{\abovedisplayskip}{2pt}
\setlength{\belowdisplayskip}{2pt}
\setlength\abovedisplayshortskip{2pt}
\setlength\belowdisplayshortskip{2pt}

\renewcommand{\paragraph}[1]{\noindent {\bf #1}}

\mdfsetup{skipabove=5pt,skipbelow=2pt}

\title{Learning Task Specifications from Demonstrations}

%

\author{
  Marcell Vazquez-Chanlatte$^1$, Susmit Jha$^2$, Ashish Tiwari$^2$, Mark K. Ho$^1$, Sanjit A. Seshia$^1$\\
  $^1$ University of California, Berkeley~~~
  $^2$ SRI International, Menlo Park\\
  \{marcell.vc, sseshia, mark\_ho\}@eecs.berkeley.edu~~~
  \{susmit.jha, tiwari\}@sri.com
}

\begin{document}

\maketitle

\begin{abstract}
  Real-world applications often naturally decompose into several
  sub-tasks. In many settings (e.g., robotics) demonstrations provide
  a natural way to specify the sub-tasks. However, most methods for
  learning from demonstrations either do not provide guarantees that
  the artifacts learned for the sub-tasks can be safely recombined or
  limit the types of composition available.  Motivated by this
  deficit, we consider the problem of inferring Boolean non-Markovian
  rewards (also known as logical trace properties or
  \emph{specifications}) from demonstrations provided by an agent
  operating in an uncertain, stochastic environment. Crucially,
  specifications admit well-defined composition rules that are
  typically easy to interpret.  In this paper, we formulate the
  specification inference task as a maximum a posteriori (MAP)
  probability inference problem, apply the principle of maximum
  entropy to derive an analytic demonstration likelihood model and
  give an efficient approach to search for the most likely
  specification in a large candidate pool of specifications. In our
  experiments, we demonstrate how learning specifications can help
  avoid common problems that often arise due to ad-hoc reward composition.
\end{abstract}


\section{Introduction}
In many settings (e.g., robotics) demonstrations provide a natural way
to specify a task. For example, an agent (e.g., human expert) gives
one or more demonstrations of the task from which we seek to
automatically synthesize a policy for the robot to execute.
Typically, one models the demonstrator as episodically operating
within a dynamical system whose transition relation only depends on
the current state and action (called the Markov
condition). However, even if the dynamics are Markovian, many
problems are naturally modeled in non-Markovian terms (see
Ex~\ref{ex:intro}).\vspace{4px}
\fbox{
\begin{minipage}[c]{0.97\textwidth}
  \noindent
  \begin{minipage}{0.7\textwidth}
    \begin{example}\label{ex:intro}
      Consider the task illustrated in Figure~\ref{fig:gridworld}. In
      this task, the agent moves in a discrete gridworld and can take
      actions to move in the cardinal directions (north, south, east,
      west). Further, the agent can sense abstract features of the
      domain represented as colors. The task is to reach any of the
      yellow (recharge) tiles without touching a red tile (lava) -- we
      refer to this sub-task as YR. Additionally, if a blue tile
      (water) is stepped on, the agent must step on a brown tile
      (drying tile) before going to a yellow tile -- we refer to this
      sub-task as BBY.  The last constraint requires recall of two
      state bits of history (and is thus not Markovian): one bit for whether
      the robot is wet and another bit encoding if the robot recharged
      while wet.
    \end{example}
  \end{minipage}
  \hfill
  \begin{minipage}[c]{0.25\textwidth}
    \begin{Figure}
      \centering
      \includegraphics[width=\linewidth]{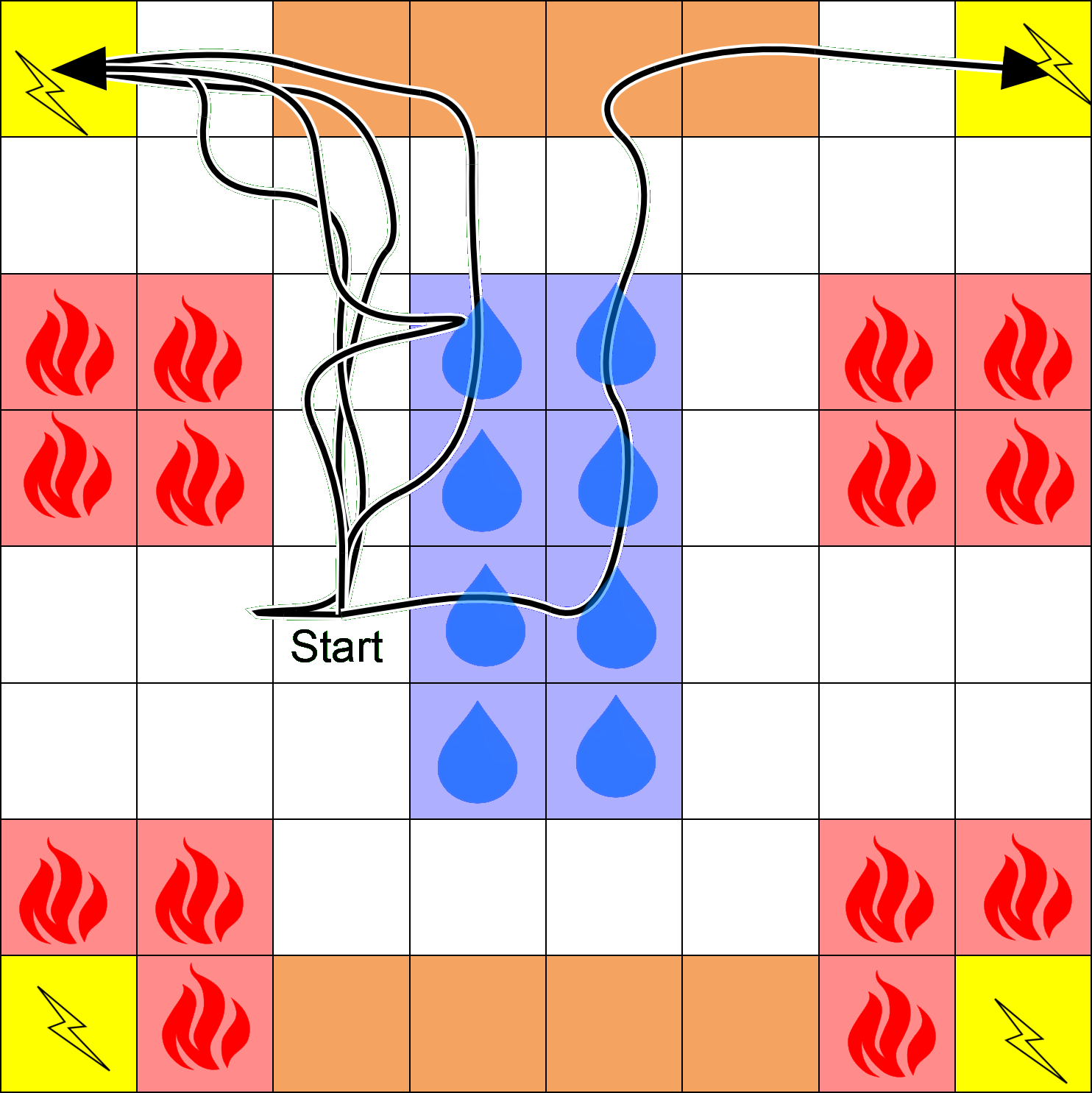}
      \captionof{figure}{\label{fig:gridworld}}
    \end{Figure}
  \end{minipage}
\end{minipage}
}

Further, like Ex~\ref{ex:intro}, many tasks are naturally decomposed
into several sub-tasks. This work aims to address the question of how
to systematically and separately learn non-Markovian sub-tasks such
that they can be readily and safely recomposed into the larger
meta-task.

Here, we argue that \textit{non-Markovian Boolean specifications} provide
a powerful, flexible, and easily transferable formalism for task
representations when learning from demonstrations. This stands in
contrast to the quantitative scalar reward functions commonly 
associated with Markov Decision Processes. 
Focusing on Boolean specifications has certain benefits: 
(1) The ability to naturally express tasks with \textit{temporal dependencies};
(2) the ability to take advantage of the \textit{compositionality} 
present in many problems, and
(3) use of {\em formal methods} for planning and verification~\cite{seshia-arxiv16}.

Although standard quantitative scalar reward functions could be used
to learn this task from demonstrations, three issues arise.  First,
consider the problem of \textit{temporal specifications}: reward
functions are typically Markovian, so requirements like those in
Ex~\ref{ex:intro} cannot be directly expressed in the task
representation. One could explicitly encode time into a state and
reduce the problem to learning a Markovian reward on new
time-dependent dynamics; however, in general, such a reduction suffers
from an exponential blow up in the state size (commonly known as the
\textit{curse of history} \cite{pineau2003point}).  When inferring
tasks from demonstrations, where different hypotheses may have
different historical dependencies, na\"{i}vely encoding the entire
history quickly becomes intractable.

A second limitation relates to the \textit{compositionality} of task
representations. As suggested, Ex~\ref{ex:intro} naturally
decomposes into two sub-tasks. Ideally, we would want an algorithm
that could learn each sub-task and combine them into the complete
task, rather than only be able to learn single monolithic tasks.
However, for many classes of quantitative rewards, "combining" rewards
remains an ad-hoc procedure. The situation is further exacerbated by
humans being notoriously bad at anticipating or identifying when
quantitative rewards will lead to unintended consequences
\cite{ho_teaching_2015}, which poses a serious problem for AI
safety~\cite{amodei2016concrete} and has led to investigations into
reward repair~\cite{ghosh2018repair}. For instance, we could take a
linear combination of rewards for each of the subtasks in
Ex~\ref{ex:intro}, but depending on the relative scales of the
rewards, and temporal discount rate, wildly different behaviors would
result.

\begin{wrapfigure}{r}{4cm}
  \centering
  \begin{subfigure}{1.0\linewidth}
    \centering
    \def\svgwidth{3.2cm}
    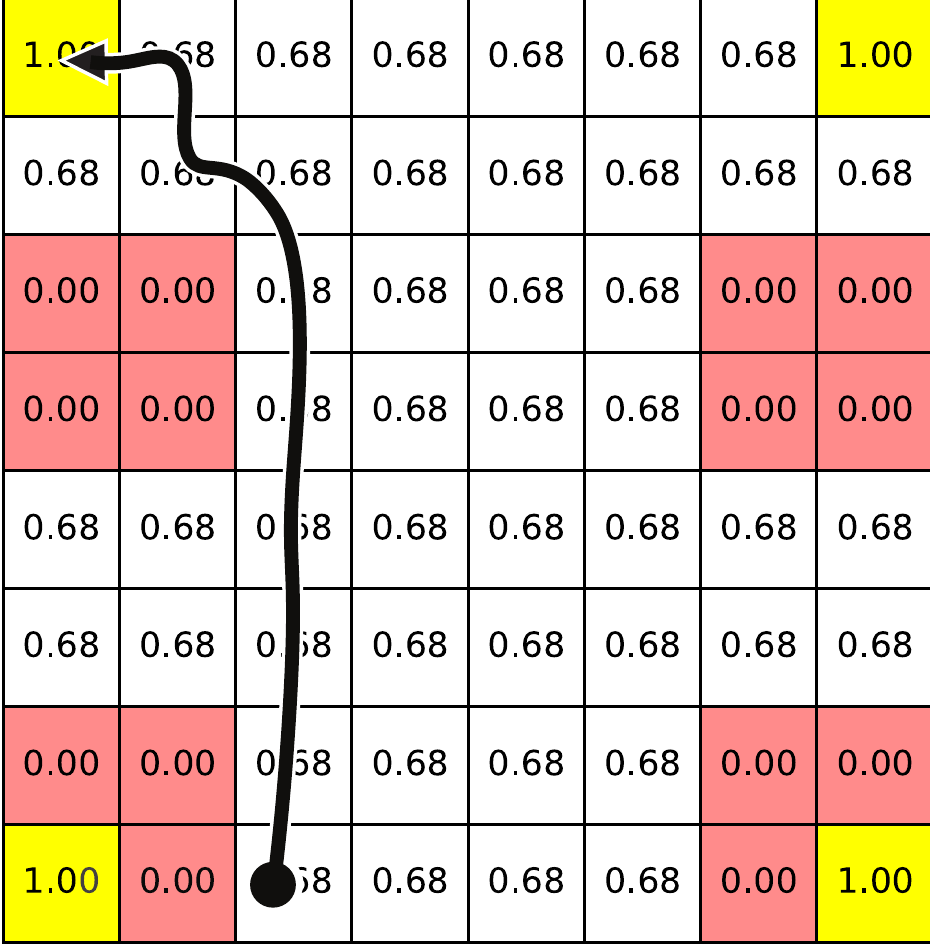
    \caption{\label{fig:yr}}
  \end{subfigure}
  \begin{subfigure}{1.0\linewidth}
    \centering
    \def\svgwidth{3.2cm}
    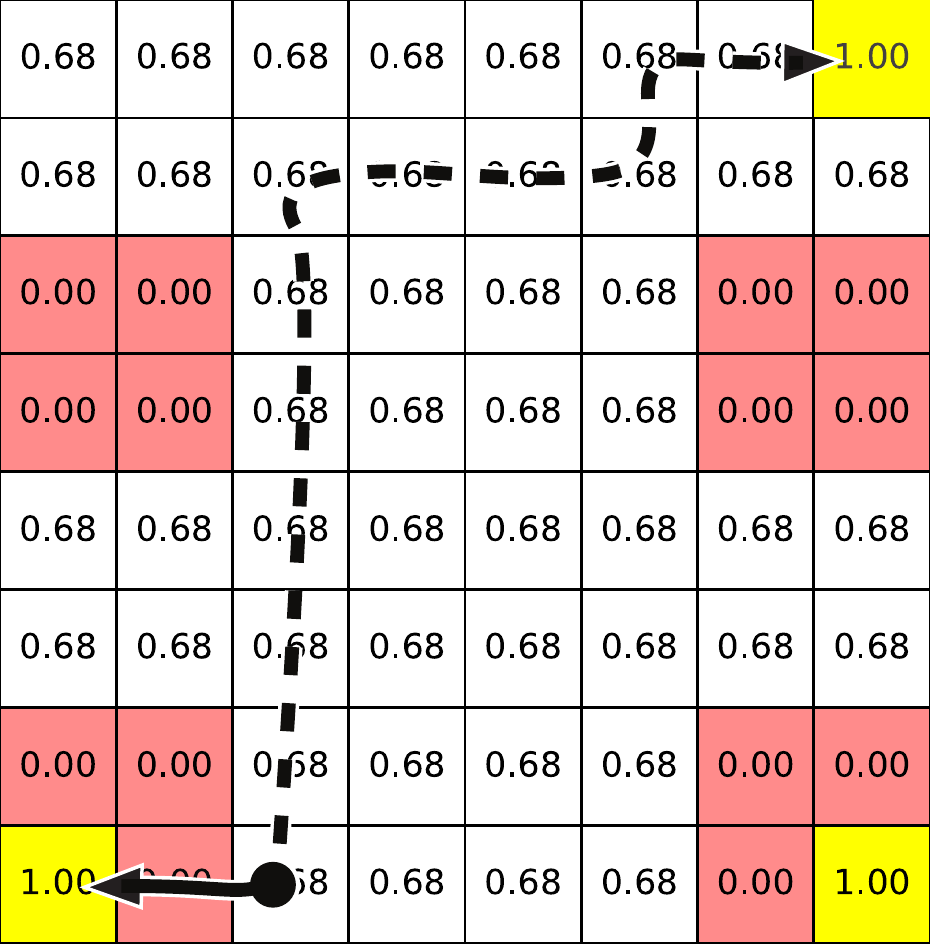
    \caption{\label{fig:bug}}
  \end{subfigure}
  \caption{Illustration of a bug in the learnt quantitative Markovian
    reward resulting from slight changes in the
    environment.\label{fig:brittle}}
\end{wrapfigure}
In fact, the third limitation - brittleness due to simple changes in
the environment - illustrates that often, the correctness of the agent
can change due to a simple change in the environment.  Namely, imagine
for a moment we remove the water and drying tiles from
Fig~\ref{fig:gridworld} and attempt to learn a reward that encodes the
``recharge while avoid lava'' task in
Ex~\ref{ex:intro}. Fig~\ref{fig:yr} illustrates the reward resulting
from performing Maximum Entropy Inverse Reinforcement
Learning~\cite{ziebart2008maximum} with the demonstrations shown in
Fig~\ref{fig:gridworld} and the binary features: red (lava tile),
yellow (recharge tile), and ``is wet''.  As is easy to verify, a
reward optimizing agent, $\sum_{i=0}^\infty \gamma^i r_i(s)$, with a
discount factor of $\gamma=0.69$ would generate the trajectory shown
in Fig~\ref{fig:yr} which avoids lava and eventually recharges.

Unfortunately, using the \emph{same} reward and discount factor on a
nearly identical world can result in the agent entering the lava. For
example, Fig~\ref{fig:bug} illustrates the learned reward being
applied to a change in the gridworld where the top left charging tile
has been removed. An acceptable trajectory is indicated via a dashed
arrow. Observe that now the discounted sum of rewards is maximized on
the solid arrow's path, resulting in the agent entering the lava! While
it is possible to find new discount factors to avoid this behavior,
such a supervised process would go against the spirit of automatically
learning the task.

Finally, we briefly remark that while non-Markovian Boolean rewards
cannot encode all possible rewards, e.g., ``run as fast as possible'',
often times such objectives can be reframed as policies for a Boolean
task. For example, consider modeling a race. If at each time step
there is a non-zero probability of entering a losing state, the agent
will run forward as fast as possible even for the Boolean task ``win
the race''.

Thus, quantitative Markovian rewards are limited as 
a task representation when learning tasks containing
\textit{temporal specifications} or \textit{compositionality} from
demonstrations.  Moreover, the need to fine tune learned tasks with
such properties seemingly undercuts the original purpose of learning
task representations that are generalizable and invariant to irrelevant
aspects of a task \cite{littman_environment-independent_2017}.


\textbf{Related Work:}
Our work is intimately related to Maximum Entropy
Inverse Reinforcement Learning. In Inverse Reinforcement Learning
(IRL)~\cite{ng2000algorithms} the demonstrator, operating in a
stochastic environment, is assumed to attempt to (approximately)
optimize some unknown reward function over the trajectories. In
particular, one traditionally assumes a trajectory's reward is the sum
of state rewards of the trajectory. This formalism offers a
succinct mechanism to encode and generalize the goals of the
demonstrator to new and unseen environments.

In the IRL framework, the problem of learning from demonstrations can
then be cast as a Bayesian inference
problem~\cite{ramachandran2007bayesian} to predict the most probable
reward function. To make this inference procedure well-defined and
robust to demonstration/modeling noise, Maximum Entropy
IRL~\cite{ziebart2008maximum} appeals to the principle of maximum
entropy~\cite{jaynes1957information}. This results in a likelihood
over the demonstrations which is no more committed to any particular
behavior than what is required for matching the empirically observed
reward expectation. While this approach was initially limited to
learning a linear combination of  feature vectors, IRL has been
successfully adapted to arbitrary function approximators such as
Gaussian processes~\cite{NIPS2011_4420} and neural
networks~\cite{finn2016guided}. As stated in the introduction,
while powerful, traditional IRL provides no principled mechanism
for composing the resulting rewards.

To address this deficit, composition using soft optimality has recently
received a fair amount of attention; however, the compositions are
limited to either strict disjunction (do X \emph{or}
Y)~\cite{todorov2007linearly}~\cite{todorov2008general} or conjunction
(do X \emph{and} Y)~\cite{haarnoja2018composable}. Further, because soft
optimality only bounds the deviation from simultaneously optimizing
both rewards, optimizing the composition does not preclude violating
safety constraints embedded in the rewards (e.g., do not enter the
lava).

The closest work to ours is recent work on inferring Linear Temporal
Logic (LTL) by finding the specification that minimizes the expected
number of violations by an optimal agent compared to the expected
number of violations by an agent applying actions uniformly at
random~\cite{kasenberg2017interpretable}. The computation of the
optimal agent's expected violations is done via dynamic programming on
the explicit product of the deterministic Rabin
automaton~\cite{farwer2002omega} of the specification and the state
dynamics. A fundamental drawback to this procedure is that due to the
curse of history, it incurs a heavy run-time cost, even on simple two
state and two action Markov Decision Processes. We also note that the
literature on learning logical specifications from examples (e.g.,
\cite{jha2017telex,vazquez2017logical,Li:EECS-2014-20}), 
does not handle noise in examples while our
approach does.
Finally, once a specification has been identified, one can leverage
the rich literature on planning using temporal logic to synthesize a
policy~\cite{kress2009temporal,saha2014automated,raman2015reactive,jha2016automated,jha2018safe}.

\textbf{Contributions:}
(i) We formulate the problem of \emph{learning specifications from
demonstrations} in terms of Maximum a Posteriori inference. (ii) To
make this inference well defined, we appeal to the principle of
maximum entropy culminating in the distribution given
\eqref{eq:model}. The main contribution of this model is that it only
depends on the probability the demonstrator will successfully perform
task and the probability that the task is satisfied by performing
actions uniformly at random.  Because these properties can be
estimated without explicitly unrolling the dynamics in time, this
model avoids many of the pitfalls characteristic to the curse of
history. (iii) We provide an algorithm that exploits the piece-wise
convex structure in our posterior model~\eqref{eq:model} to
efficiently perform Maximum a Posteriori inference for the most
probable specification.

\textbf{Outline:} In Sec~\ref{sec:setup}, we define specifications and
probabilistic automata (Markov Decision Processes without rewards).
In Sec~\ref{sec:inference}, we introduce the problem of
\emph{specification inference from demonstrations}, and inspired by
Maximum Entropy IRL~\cite{ziebart2008maximum}, develop a model of the
posterior probability of a specification given a sequence of
demonstrations.  In Sec~\ref{sec:alg}, we develop an algorithm
to perform inference under~\eqref{eq:model}.  Finally, in Sec~\ref{sec:casestudies}, we demonstrate
how due to their inherent composability, learning specifications can
avoid common bugs that often occur due to ad-hoc reward composition.


\vspace{-10px}
\section{Background}\label{sec:setup}
We seek to learn specifications from demonstrations provided by a
teacher who executes a sequence of actions that probabilistically
changes the system state. For simplicity, we assume that the set of
actions and states are finite and fully observed. The system is
naturally modeled as a probabilistic automaton\footnote{Probabilistic Automata are often constructed as a
Markov Decision Process, $M$, without its Markovian reward map $R$, denoted $M \setminus R$.}
formally defined below:
\begin{mddef}[Probabilistic Automaton]
  A \textbf{probabilistic automaton} is a tuple $ M = (S, s_0, A, \delta) $,
  where $ S $ is the finite set of states, $s_0 \in S$ is the initial
  state, $ A $ is the finite set of actions, and
  $ \delta : S \times A \times S \to [0, 1]$ specifies the transition
  probability of going from $ s $ to $ s' $ given action $ a $,
  i.e. $\delta(s, a, s') = \Prob(s'~|~s,a) $ and
  $\displaystyle \sum_{s'\in S} \Prob(s'~|~s,a) = 1$ for all states
  $s$.
\end{mddef}

\begin{mddef}[Trace]
  A sequence of state/action pairs is called a \textbf{trace}
  (trajectory, demonstration).  A trace of length $\tau \in \Nat$ is
  an element of $(S \times A)^{\tau}$.
\end{mddef}
Next, we develop machinery to distinguish between desirable and
undesirable traces. For simplicity, we focus on finite trace
properties, referred to as specifications, that are decidable within
some fixed $\tau \in \Nat$ time steps, e.g., ``event A occurred in the
last 20 steps''.
\begin{mddef}[Specification]
  Given a set of states $S$, a set of actions $A$, and a fixed trace
  length $\tau \in \Nat$, a \textbf{specification} is a subset of
  traces $\varphi \subseteq (S \times A)^\tau$. We define
  $\true \eqdef (S \times A)^\tau$,
  $\neg \varphi \eqdef \true \setminus \varphi$, and
  $\false \eqdef \neg \true$. A collection
  of specifications, $\Phi$, is called a \textbf{concept class}.
  Finally, we abuse notation and use
  $\varphi$ to also denote its indicator function (interpreted as a non-Markovian Boolean reward),
  \begin{equation}
    \varphi(\xi) \eqdef 
    \begin{cases}
      1 & \text{if } \xi \in \varphi\\
      0 & \text{otherwise}
    \end{cases}
    .
  \end{equation}
\end{mddef}
Specifications may be given in formal notation, as sets or
automata. Further, each representation facilitates defining a
plethora of composition rules. For example, consider two specifications,
$\varphi_A$, $\varphi_B$ that encode tasks $A$ and $B$ respectively
and the composition rule $\varphi_A \cap \varphi_B: \xi \mapsto
\min\big (\varphi_A(\xi), \varphi_B(\xi)\big )$. Because the agent
only receives a non-zero reward if $\varphi_A(\xi) = \varphi_B(\xi) = 1$, a
reward maximizing agent must necessarily perform tasks $A$ and $B$
simultaneously. Thus, $\varphi_A \cap \varphi_B$ corresponds to
conjunction (logical \emph{and}). Similarly, maximizing $\varphi_A \cup
\varphi_B : \xi \mapsto \max\big(\varphi_A(\xi), \varphi_B(\xi)\big)$
corresponds to disjunction (logical \emph{or}). One can also encode
conditional requirements using subset inclusion, e.g.,
maximizing $\varphi_A \subseteq \varphi_B : \xi \mapsto
\max\big(1 - \varphi_A(\xi), \varphi_B(\xi)\big)$ corresponds to
task A triggering task B.

Complicated temporal connectives can also be defined using temporal
logics~\cite{pnueli1977temporal} or
automata~\cite{vardi1996automata}. For our purposes, it suffices to
informally extend propositional logic with three temporal
operators: (1) Let $H a$, read ``historically $a$'', denote that property $a$ held at all previous time steps. (2) Let $P a \eqdef \neg (H
\neg a)$, read ``once $a$'', denote that the property $a$ 
at least once held in the past. (3) Let $a~S~b$, read ``$a$ since $b$'', denote that the property
$a$ that has held every time step after $b$ last held. The true power
of temporal operators is realized when they are composed to make more
complicated sentences. For example, $H (a \implies (b~S~c))$
translates to ``it was always the case that if $a$ was true, then
$b$ has held since the last time $c$ held.''. Observe that the property BBY from the introductory example
takes this form, $H ((\text{yellow} \wedge P~\text{blue}) \implies
(\neg \text{blue}~S~\text{brown}))$, i.e., ``Historically, if the agent had once visited blue and is currently visiting yellow, then the agent has not visited blue since it last visited brown''.

\section{Specification Inference from Demonstrations}\label{sec:inference}
\vspace{-5px}
In the spirit of Inverse Reinforcement Learning, we now seek to
find the specification that best explains the behavior of the
agent. We refer to this as \emph{Specification Inference from Demonstrations}.
\begin{mddef}[Specification Inference from
  Demonstrations]\label{problem1}
  The \textbf{specification inference from demonstrations} problem is a tuple
  $(M, \Demos, \Phi)$ where $M=(S, s_0, A, \delta) $ is a
  probabilistic automaton, $\Demos$ is a (multi-)set of $\tau$-length
  traces drawn from an unknown distribution induced by a teacher
  attempting to demonstrate some unknown specification within $M$, and $ \Phi $ a
  concept class of specifications. 
  
  A solution to
  $(M, \Demos, \Phi)$ is:
  \begin{equation}
    \label{eq:RiskAverseProblem} \varphi^* \in \argmax_{\varphi \in \Phi}\Prob
    (\varphi~|~M, \Demos)\\
  \end{equation}
  where $\Prob(\varphi~|~M, \Demos)$ denotes the probability that
  the teacher demonstrated $\varphi$
  given the observed traces, $\Demos$, and the dynamics, $M$.
\end{mddef}
To make this inference well-defined, we make a series of assumptions
culminating in \eqref{eq:model}.

\textbf{Likelihood of a demonstration:}
We begin by leveraging the principle of maximum entropy to
disambiguate the likelihood distributions. Concretely,
define:
\begin{equation}
  \label{eq:trace_weight}
  w\big(\vec{\xi}=(\vec{s}, \vec{a}), M\big) = \prod_{i=0}^{\tau-1}\Prob(s_{i+1}|s_i, a_i, M)
\end{equation} where $\vec{s}$ and $\vec{a}$ are the 
projected sequences of states and
actions of $\xi$ respectively,
to be the weight of each possible demonstration $\xi$ induced by
dynamics $M$.  Given a demonstrator who on average satisfies the
specification $\varphi$ with probability $\overline{\varphi}$, we
approximate the likelihood function by:
\begin{equation}\label{eq:max_ent_dist}
    \Prob\big(\vec{\xi}~|~M, \varphi, \overline{\varphi}\big) = 
    w(\xi, M)\cdot \frac{\exp(\lambda_\varphi
      \varphi(\xi))}{Z_\varphi}
\end{equation} where $\lambda_\varphi$, $Z_\varphi$ are
normalization factors such that $\Expected_\xi[\varphi] =
\overline{\varphi}$ and $\sum_\xi \Prob(\xi~|~M,\varphi)
=1$. For a detailed derivation that~\eqref{eq:max_ent_dist} is the maximal entropy
distribution, we point the reader
to~\cite{DBLP:journals/corr/abs-1805-00909}. Next observe
that due to the Boolean nature of $\varphi$,~\eqref{eq:max_ent_dist} admits a simple closed form:
\begin{empheq}[box=\fbox]{align}\label{eq:max_ent_dist2}
  \Prob(\vec{\xi}~|~M, \varphi, \overline{\varphi})=\widetilde{\setof{\vec{\xi}}}\cdot
  \begin{cases}
    \overline{\varphi}/\widetilde{\varphi} & \xi \in \varphi\\
    (1 - \overline{\varphi})/\widetilde{\neg \varphi} & \xi \notin
    \varphi
  \end{cases}
\end{empheq}
where in general we use $\widetilde{(\cdot)}$ to denote the probability of
satisfying a specification using uniformly random actions. Thus, we denote by $\widetilde{\setof{\vec{\xi}}}$ the probability of randomly generating demonstration $\xi$
within $M$. Further, note that by the law of the excluded middle, for any specification:
$\widetilde{\neg \varphi} = 1 - \widetilde{\varphi}$.

\begin{proof}[Proof Sketch]
  For brevity, let
  $W_\varphi \eqdef \sum_{\vec{\xi} \in \varphi}
  w(\vec{\xi}, M)$ and $c \eqdef e^{\lambda_\varphi}$. Via the constraints
  on~\eqref{eq:max_ent_dist},
  \begin{equation}
    \begin{split}
      &Z_\varphi\cdot \overline{\varphi} = 1\cdot\sum_{\vec{\xi} \in \varphi} c^1\cdot w(\xi, M) + 0\cdot\sum_{\vec{\xi} \notin \varphi} c^0\cdot w(\xi, M)
    = c W_\varphi\\
    &Z_\varphi = c^1 \sum_{\vec{\xi} \in
      \varphi} w(\vec{\xi}, M) + c^0\sum_{\vec{\xi} \notin \varphi} w(\vec{\xi},M) = c W_\varphi + W_{\neg \varphi}
    \end{split}
  \end{equation}
  Combining gives
  $Z_\varphi = W_{\neg \varphi}/(1 - \overline{\varphi})$. Next,
  observe that if $\vec{\xi} \not \in \varphi$, then
  $e^{\lambda_\varphi \varphi(\xi)} = 1$ and substituting in~\eqref{eq:max_ent_dist} yields,
  $\displaystyle\Prob(\vec{\xi}~|~\varphi, M, \vec{\xi} \notin
    \varphi) = w_\xi(1 - \overline{\varphi})/ W_{\neg \varphi}$.
  If $\vec{\xi} \in \varphi$ (implying $W_\varphi \neq 0$) then
  $e^{\lambda_\varphi} = Z_\varphi \overline{\varphi} / W_\varphi$ and
  $\displaystyle \Prob(\vec{\xi}~|~\varphi, M, \vec{\xi} \in
  \varphi) = w_\xi\overline{\varphi}/ W_\varphi$.
  Finally, observe that
  $\widetilde{\varphi} = W_\varphi/ W_{\text{true}}$ and
  $\widetilde{\setof{\vec{\xi}}} =
  w_\vec{\xi}/ W_{\text{true}}$. Substituting and factoring
  yields~\eqref{eq:max_ent_dist2}.
\end{proof}  

\textbf{Likelihood of a set of demonstrations:}
If the teacher gives a finite sequence of $\tau$ length
demonstrations, $\Demos$, drawn i.i.d. from~\eqref{eq:max_ent_dist2},
then the log likelihood, $\mathcal{L}$, of $\Demos$ under~\eqref{eq:max_ent_dist2} is:\footnote{ We
  have suppressed a multinomial coefficient required if any two
  demonstrations are the same.  However, this term will not
  change as $\varphi$ varies, and thus cancels when comparing across specifications.}
\begin{equation}\label{eq:max_ent_dist3}
  \mathcal{L}(\Demos~|~M, \varphi, \overline{\varphi}) = \displaystyle \log\bigg (\prod_{\vec{\xi} \in \Demos}
  \widetilde{\setof{\vec{\xi}}}\bigg) + N_\varphi \ln\left(\frac{\overline{\varphi}}{
    \widetilde{\varphi}}\right)
    + N_{\neg\varphi} \ln\left(\frac{\overline{\neg\varphi}}{
    \widetilde{\neg \varphi}}\right)
\end{equation}
where by definition we take $(0\cdot \ln( \ldots ) = 0)$ and $\displaystyle N_\varphi \eqdef \sum_{\vec{\xi}\in \Demos} \varphi(\xi)$.

Next, observe that
$\displaystyle \left[\overline{\varphi}\ln\left(\frac{\overline{\varphi}}
    {\widetilde{\varphi}}\right) + (1 - \overline{\varphi})\ln\left(\frac{1 -
      \overline{\varphi}}{ 1 - \widetilde{ \varphi}}\right)\right]$ is the
information gain (KL divergence) between two Bernoulli distributions with means
$\overline{\varphi}$ and $\widetilde{\varphi}$ resp. Syntactically, let $\mathcal{B}(\mu)$
denote a Bernoulli distribution with mean $\mu$ and
$\displaystyle \KL(P~\|~Q) \eqdef \sum_{i}P(i) \ln(P(i)/Q(i))$
denote the information gain when using distribution
$P$ compared to $Q$. If $X$ is ``representative'' such that $N_\varphi \approx \overline{\varphi}\cdot|\Demos|$, we can (up to a $\varphi$ independent normalization) approximate~\eqref{eq:max_ent_dist3}:
\begin{equation}\label{eq:max_ent_dist5}
  \Prob(\Demos~|~M, \varphi, \overline{\varphi})\appropto
  \exp{\Big(|\Demos|\cdot \KL\Big
      (\Bernoulli(\overline{\varphi})~\|~\Bernoulli(\widetilde{\varphi})\Big
)\Big)}  
\end{equation}
Where $\appropto$ denotes approximately proportional to.
Unfortunately, the approximation $|\Demos|\cdot \overline{\varphi}
\approx N_\phi$ implies that, $\overline{\neg \varphi} = 1 -
\overline{\varphi}$ which introduces the undesirable symmetry,
$\Prob(\Demos~|~M, \varphi, \overline{\varphi}) = \Prob(\Demos~|~M,
\neg \varphi, \overline{\neg \varphi})$,
into~\eqref{eq:max_ent_dist5}. To break this
symmetry, we assert that the demonstrator must
be at least as good as random. Operationally, we assert that
$\Prob(\varphi~|~\overline{\varphi} < \widetilde{\varphi}) = 0$ and is
otherwise uniform. Finally, we arrive at the posterior distribution given
in~\eqref{eq:model}, where $\mathbf{1}[\cdot]$ denotes an indicator function.
\begin{empheq}[box=\fbox]{align}\label{eq:model}
  ~~\Prob(\varphi~|~M, \Demos, \overline{\varphi}) \appropto \overbrace{\mathbf{1}[\overline{\varphi}\geq \widetilde{\varphi}]}^{\mathclap{\text{Demonstrator is better than random.}}}\cdot\exp\big(
      |\Demos|\cdot\overbrace{\KL\left
    (\Bernoulli(\overline{\varphi})~\|~\Bernoulli(\widetilde{\varphi})\right
    )}^{\mathclap{\text{Information gain over random actions.}}}
  \big)~~
\end{empheq}


\section{Algorithm}\label{sec:alg}
In this section, we exploit the structure imposed
by~\eqref{eq:model} to efficiently search for the most probable
specification~\eqref{eq:RiskAverseProblem} within a (potentially large) concept class, $\Phi$. Namely, observe that under~\eqref{eq:model}, the specification inference problem~\eqref{eq:RiskAverseProblem} reduces to maximizing the information gain over random actions.
\begin{empheq}[box=\fbox]{align}\label{eq:max_kl}
    \varphi^* \in \argmax_{\varphi \in \Phi}\left \{\mathbf{1}[\overline{\varphi}\geq \widetilde{\varphi}]\cdot\KL\Big
    (\Bernoulli(\overline{\varphi})~\|~\Bernoulli(\widetilde{\varphi})\Big)\right \}
\end{empheq}

Because gradients on
$\widetilde{\varphi}$ and $\overline{\varphi}$ are not well-defined,
gradient descent based algorithms are not applicable.  Further, while
evaluating if a trace satisfies a specification is fairly efficient
(and thus our $N_\varphi / |\Demos|$ approximation to $\overline{\varphi}$ is assumed easy to compute), computing
$\widetilde{\varphi}$ is in general known to be
$\#P$-complete~\cite{bacchus2003algorithms}. 
Nevertheless, in practice, moderately efficient methods
for computing or approximating $\widetilde{\varphi}$ exist including
Monte Carlo simulation~\cite{metropolis1949monte} and weighted model
counting~\cite{chavira2008probabilistic} via Binary Decision
Diagrams (BDDs)~\cite{bryant1992symbolic} or repeated SAT
queries~\cite{chakraborty2016algorithmic}. As such, we seek an
algorithm that poses few $\widetilde{\varphi}$ queries. We  begin
with the observation that adding a trace to a
specification cannot lower its probability of satisfaction under random actions.
\begin{mdframed}
  \begin{lemma}\label{lemma:G_monotone}
    $\forall \varphi', \varphi \in
    \Phi~.~\varphi' \subseteq \varphi$ implies
    $\widetilde{\varphi}' \leq \widetilde\varphi$ and
    $\overline{\varphi}' \leq \overline{\varphi}$.
  \end{lemma}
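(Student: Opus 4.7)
The plan is to recognize that the lemma is essentially a restatement of the monotonicity of probability measures: both $\widetilde{\varphi}$ and $\overline{\varphi}$ are expectations of the Boolean indicator $\varphi(\xi)$ under fixed trace distributions, and set inclusion implies a pointwise inequality between indicators.

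First, I would unpack the two quantities explicitly. By the definition used in the paper, $\widetilde{\varphi}$ is the probability that a trace generated by choosing actions uniformly at random in $M$ lies in $\varphi$, that is,
\begin{equation}
  \widetilde{\varphi} \;=\; \sum_{\xi \in (S\times A)^\tau} \widetilde{\{\xi\}} \cdot \varphi(\xi),
\end{equation}
and $\overline{\varphi}$ is the analogous expectation under the (unknown) teacher distribution $p_{\text{demo}}$ over $\tau$-length traces, $\overline{\varphi} = \sum_\xi p_{\text{demo}}(\xi)\,\varphi(\xi)$. Both are expectations of the indicator $\varphi(\cdot)$ with respect to a nonnegative measure on traces.

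Second, I would invoke the hypothesis $\varphi' \subseteq \varphi$. Read as Boolean indicators (Definition~3), this means $\varphi'(\xi) \leq \varphi(\xi)$ pointwise for every $\xi \in (S\times A)^\tau$. Integrating this pointwise inequality against the uniform-random trace distribution gives $\widetilde{\varphi}' \leq \widetilde{\varphi}$, and integrating it against the teacher's trace distribution gives $\overline{\varphi}' \leq \overline{\varphi}$. Both inequalities follow from linearity and monotonicity of expectation with respect to nonnegative weights.

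I do not anticipate any real obstacle: the only mild care required is to spell out that $\widetilde{\cdot}$ and $\overline{\cdot}$ are expectations of the same indicator under two different, but in each case fixed (independent of $\varphi$), distributions over traces, so that the set-theoretic inclusion $\varphi' \subseteq \varphi$ transports cleanly into an inequality between the corresponding probabilities.
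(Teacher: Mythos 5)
Your proposal is correct and takes essentially the same route as the paper: both arguments reduce the claim to monotonicity of a fixed (i.e., $\varphi$-independent) probability measure under set inclusion, applied once to the uniform-random trace distribution for $\widetilde{\varphi}$ and once to the demonstration distribution for $\overline{\varphi}$. Your version merely spells out explicitly, via pointwise inequality of indicators and monotonicity of expectation, what the paper's one-line proof asserts directly.
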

  \begin{proof}
    \vspace{-8px}
    The probability of sampling an element of a
    set monotonically increases as elements are added to the set
    independent of the \emph{fixed} underlying distribution over elements.
  \end{proof}  
\end{mdframed}

Further, note that $N_\varphi$ (and thus, our approximation to
$\overline{\varphi}$) can only take on $|\Demos|+1$ possible
values. This suggests a piece-wise analysis of~\eqref{eq:max_kl} by
conditioning on the value of $\overline{\varphi}$.
\begin{mddef}
  Given candidate specifications $\Phi$ and a subset of demonstrations $S \subseteq \Demos$ define,
  \begin{equation}
    \Phi_S \eqdef \setof{\varphi \in \Phi~:~ \varphi \cap \Demos = S }~~~~~
      J_{|S|}(x) \eqdef \mathbf{1}\left[\frac{|S|}{|\Demos|}\geq x\right]\cdot\KL\left
    (\Bernoulli(\frac{|S|}{|\Demos|})~\|~\Bernoulli(x)\right)
  \end{equation}
\end{mddef}
The next key observation is that $J_{|S|} : [0, 1] \to \mathbb{R}_{\geq 0}$ monotonically decreases in $x$.
\begin{mdframed}
  \begin{lemma}\label{lemma:local_minima}
    $\forall S \subseteq \Demos,~x < x' \implies J_{|S|}(x) \leq J_{|S|}(x')$
  \end{lemma}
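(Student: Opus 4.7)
The plan is to verify the monotonicity claim by analyzing $J_{|S|}$ as a piecewise function of $x \in [0,1]$ with $p := |S|/|\Demos|$ held fixed, then establishing the inequality via a case split on where $x$ and $x'$ fall relative to $p$. The domain splits cleanly: on $(p, 1]$, the indicator $\mathbf{1}[p \geq x]$ vanishes, so $J_{|S|} \equiv 0$ there; on $[0, p]$, the function equals $g(x) := p\ln(p/x) + (1-p)\ln((1-p)/(1-x))$, using the convention $0\ln 0 = 0$ to absorb the degenerate cases $p \in \{0,1\}$. Given $x < x'$, the three cases to consider are: (a) both $x, x' > p$, where $J_{|S|}(x) = J_{|S|}(x') = 0$ trivially; (b) both $x, x' \leq p$, where the inequality reduces to comparing $g(x)$ and $g(x')$; and (c) $x \leq p < x'$, where $J_{|S|}(x') = 0$, so the claim reduces to checking that $g(x) \leq 0$.

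For case (b), I would differentiate $g$ to obtain $g'(x) = -p/x + (1-p)/(1-x) = (x-p)/(x(1-x))$, whose sign on $(0, p)$ is determined by $x - p$. Integrating this derivative over $[x, x']$ then yields $g(x') - g(x) = \int_{x}^{x'} g'(t)\, dt$, whose sign pins down the direction of monotonicity on the active region. Case (c) would invoke non-negativity of $\KL$ (Gibbs's inequality) together with the fact that $\KL(\Bernoulli(p) \| \Bernoulli(x)) = 0$ iff $x = p$, ruling out strict inequality unless the boundary is reached.

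The main obstacle — and the step I would flag explicitly to a reader — is that the derivative calculation in case (b) forces the \emph{opposite} direction of monotonicity to what the formal inequality asserts: the sign of $(x - p)/(x(1-x))$ on $(0, p)$ is negative, which gives $g(x) \geq g(x')$, hence $J_{|S|}(x) \geq J_{|S|}(x')$. This matches the surrounding prose stating that ``$J_{|S|}$ monotonically decreases in $x$'', but conflicts with the $\leq$ as written. If the formal inequality $J_{|S|}(x) \leq J_{|S|}(x')$ is indeed the intended statement, the plan's conclusion only vacuously holds in degenerate subcases (either both values are zero, or $p \in \{0,1\}$); the most parsimonious reading is that the $\leq$ is a typographical inversion of $\geq$, and the proof plan above establishes the latter via the same three-case analysis.
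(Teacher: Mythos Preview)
Your analysis is correct, and you have rightly identified that the inequality in the lemma statement is inverted relative to the surrounding prose and to how the lemma is actually used downstream (Theorem~\ref{thm:max_f_in_chain}); the intended claim is $J_{|S|}(x) \geq J_{|S|}(x')$, and your three-case argument establishes exactly that.

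Your route differs from the paper's in one substantive way. You prove monotonicity on $[0,p]$ by computing $g'(x) = (x-p)/(x(1-x))$ and reading off its sign. The paper instead appeals to convexity: it observes that $x \mapsto \KL(\Bernoulli(p)\,\|\,\Bernoulli(x))$ is convex on $(0,1)$ and attains its minimum value $0$ at $x = p$, so on $[0,p]$ the function must be non-increasing (a convex function is monotone on either side of its minimizer). The paper then handles the region $x > p$ exactly as you do, via the indicator forcing $J_{|S|} \equiv 0$ there. Your derivative computation is more elementary and self-contained; the paper's convexity argument is shorter but presupposes the reader knows (or accepts) that the Bernoulli KL is convex in its second argument. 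Either way, the logical content is the same, and your case~(c) reduces to Gibbs's inequality just as the paper's non-negativity remark does.
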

  \begin{proof}
    \vspace{-6px} To begin, observe that $\KL$ is always
    non-negative. Due to the
    $1[\frac{|S|}{|\Demos|} \geq x]$ indicator,
    $J_{|S|}(x)=0$ for all $x > |S|/|\Demos|$.  Next, observe that $J_{|S|}$
    is convex due to the convexity of the $\KL$ on Bernoulli
    distributions and is minimized at $x=|S|/|\Demos|$ (KL Divergence
    of identical distributions is $0$).  Thus, $J_{|S|}(x)$ monotonically
    decreases as $x$ increases.
  \end{proof}
\end{mdframed}
\begin{figure}[h]
  \centering
  \begin{subfigure}[c]{0.32\textwidth}
    \includegraphics[width=\textwidth]{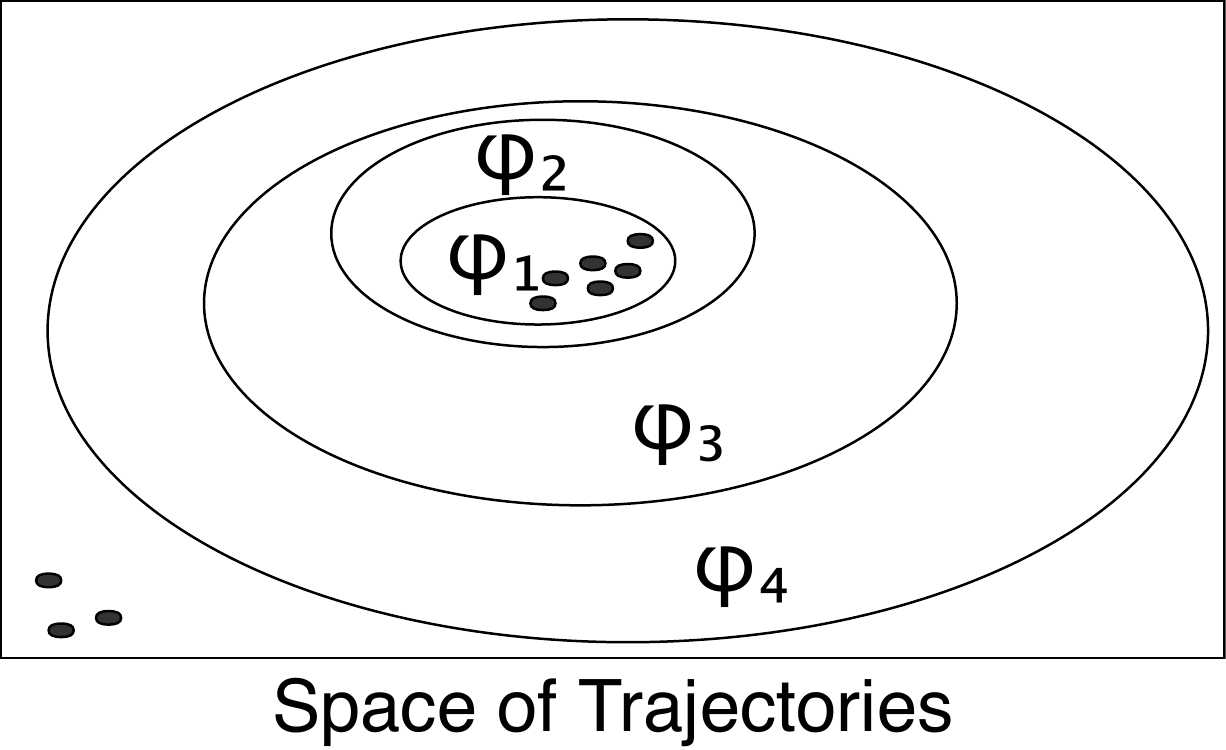}%
  \end{subfigure}
  \begin{subfigure}[c]{0.42\textwidth}
    \scalebox{0.36}{\input{imgs/thm.pgf}}
  \end{subfigure}
  \vspace{-2px}
  \caption{Left: An example of a series of
    specifications $\varphi_1, \ldots, \varphi_4$ ordered by subset
    inclusion. The dots represent demonstrations, and so each
    specification has $\overline{\varphi}_i = 6/9$. Right: Plot of
    $J_{|S|}(x)$ for hypothetical values of $\widetilde{\varphi}_i$
    annotated as points. Notice that the sequence of specifications is
    ordered on the $x$-axis, and thus the maximum must occur at the
    start of the sequence.\label{fig:thm41}}
\end{figure}
These insights are then combined in Theorem~\ref{thm:max_f_in_chain}
and illustrated in Fig~\ref{fig:thm41}.
\begin{mdframed}[style=2, everyline=true]
\begin{theorem}\label{thm:max_f_in_chain}
  If $A$ denotes a sequence of specifications,
  $\varphi_1, \ldots, \varphi_n$, ordered by subset
  inclusion $j \leq k \implies \varphi_j \subseteq
  \varphi_k$ and $S \subseteq \Demos$ is an arbitrary subset of demonstrations, then:
  \begin{equation}
    \max_{\varphi \in A} J_{|S|}(\widetilde{\varphi}) = J_{|S|}(\widetilde{\varphi}_1)    
  \end{equation}
\end{theorem}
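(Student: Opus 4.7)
The plan is to chain together the two preceding lemmas. The chain hypothesis $\varphi_1 \subseteq \varphi_2 \subseteq \cdots \subseteq \varphi_n$ combined with \lemref{G_monotone} immediately gives a corresponding chain of real numbers
\[
\widetilde{\varphi}_1 \;\leq\; \widetilde{\varphi}_2 \;\leq\; \cdots \;\leq\; \widetilde{\varphi}_n
\]
in $[0,1]$. So the values at which we are evaluating $J_{|S|}$ form a monotone nondecreasing sequence along $A$.

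Next, I would invoke \lemref{local_minima}, which says $J_{|S|}$ is monotonically nonincreasing on $[0,1]$. Applying this to the ordered sequence above yields
\[
J_{|S|}(\widetilde{\varphi}_1) \;\geq\; J_{|S|}(\widetilde{\varphi}_2) \;\geq\; \cdots \;\geq\; J_{|S|}(\widetilde{\varphi}_n),
\]
so the maximum of $\{J_{|S|}(\widetilde{\varphi}_k)\}_{k=1}^n$ is attained at $k=1$, which is exactly the claim.

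There is essentially no hard step: the theorem is a one-line corollary of the two lemmas, observing only that both the ``$\widetilde{\cdot}$'' operator and $J_{|S|}$ preserve (or reverse) order, and composing a nondecreasing map with a nonincreasing map yields a nonincreasing map. The only subtle point worth mentioning in the write-up is that the subset $S$ plays no role in the argument beyond fixing the function $J_{|S|}$: the statement holds uniformly in $S$ because \lemref{local_minima} holds for every $S \subseteq \Demos$. The illustration in Fig~\ref{fig:thm41} matches this picture exactly: the $\widetilde{\varphi}_i$ are laid out in increasing order on the $x$-axis, and the monotone decreasing envelope $J_{|S|}$ forces its maximum over that finite set to occur at the leftmost point.
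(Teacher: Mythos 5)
Your argument is exactly the paper's proof: apply Lemma~\ref{lemma:G_monotone} to get that $\widetilde{\varphi}$ is nondecreasing along the chain, then compose with the nonincreasing $J_{|S|}$ from Lemma~\ref{lemma:local_minima} to conclude the maximum sits at $\varphi_1$. Correct and the same route; no further comment needed.
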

\begin{proof}
  \vspace{-6px}
  $\widetilde{\varphi}$ is monotonically increasing on $A$
  (Lemma~\ref{lemma:G_monotone}).  Via Lemma~\ref{lemma:local_minima}
  $J_{|S|}(x)$ is monotonically decreasing and thus the maximum of
  $J_{|S|}(\widetilde{\varphi})$ must occur at the beginning of
  $A$.
\end{proof}  
\end{mdframed}

\noindent
\mypara{Lattice Concept Classes}

\begin{wrapfigure}{r}{5cm}
  \centering
  \def\svgwidth{3cm}
  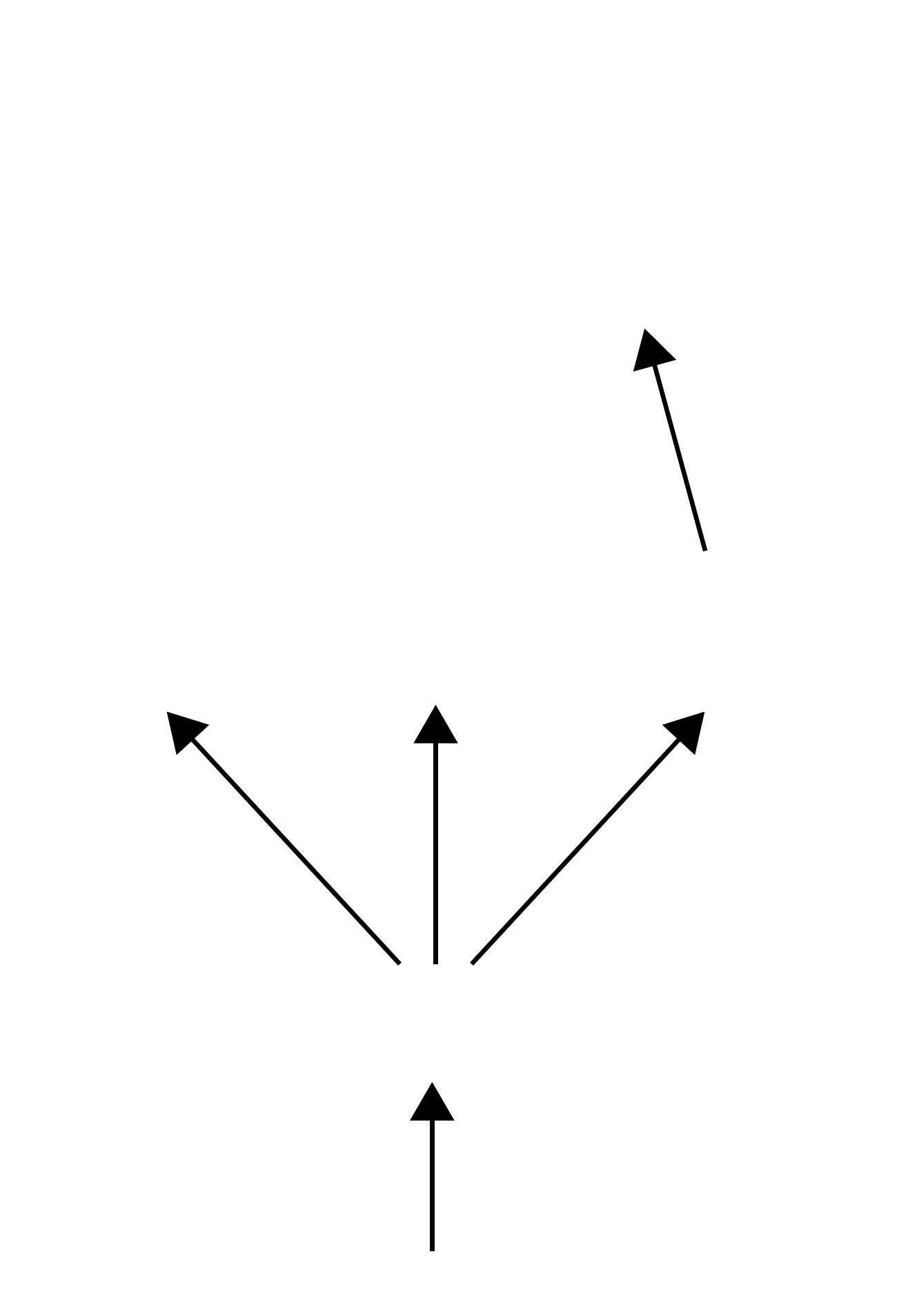
  \caption{Hasse Diagram of an example lattice $\Phi$ with an
    anti-chain annotated. Directed edges represent known subset
    relations and paths represent chains.\label{fig:Hasse}}
\end{wrapfigure}
Theorem~\ref{thm:max_f_in_chain} suggests specializing to concept
classes where determining subset relations is easy.  We propose
studying concept classes organized into a finite (bounded) lattice,
($\Phi, \pOrder$), that respects subset inclusion: $ (\varphi \pOrder
\varphi' \implies \varphi \subseteq \varphi')$.  To enforce the
bounded constraint, we assert that $\true$ and $\false$ are always
assumed to be in $\Phi$ and act as the bottom and top of the partial
order respectively.
Intuitively, this lattice structure encodes our partial knowledge of
which specifications imply other specifications. These implication
relations can be represented as a directed graph where the nodes
correspond to elements of $\Phi$ and an edge is present if the source
is known to imply the target.  Because implication is transitive, many
of the edges can be omitted without losing any information. The graph
resulting from this transitive reduction is called a Hasse diagram
~\cite{christofides1975graph} (See Fig~\ref{fig:Hasse}).
In terms of the graphical model, the Hasse diagram encodes that for
certain pairs of specifications, $\varphi, \varphi'$, we know that
$\Prob(\varphi(\xi)=1~|~\varphi'(\xi)=1, M) = 1$ or
$\Prob(\varphi(\xi)=0~|~\varphi'(\xi)=0, M) = 1$.

\noindent
\mypara{Inference on chain concept classes}
Sequences of specifications ordered by subset inclusion generalize
naturally to ascending chains.
\begin{mdframed}
  \begin{definition}[Chains and Anti-Chains]
    Given a partial order $(\Phi, \pOrder)$, an ascending \emph{chain} (or just
    chain) is a sequence of elements of $A$ ordered by $\pOrder$. The smallest element of the chain is denoted, $\downarrow(A)$. Finally,
    an \emph{anti-chain} is a set of incomparable elements. An anti-chain
    is called maximal if no element can be added to it without
    making two of its elements comparable.
  \end{definition}
\end{mdframed}
Recasting Theorem~\ref{thm:max_f_in_chain} in the parlance of chains yields:
\begin{mdframed}
  \begin{corollary}\label{cor:max_f_in_chain_partition}
    If $S \subseteq \Demos$ is a subset of demonstrations and
    $A$ is a chain in $(\Phi_S, \pOrder)$ then:
    \begin{equation}
      \max_{\varphi \in A} J_{|S|}(\widetilde{\varphi}) = J_{|S|}(\widetilde{\downarrow(A)})
    \end{equation}
  \end{corollary}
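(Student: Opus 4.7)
The plan is to recognize the corollary as essentially a notational repackaging of Theorem~\ref{thm:max_f_in_chain} and reduce to it directly. Recall that the lattice $(\Phi, \pOrder)$ is required to respect subset inclusion, i.e.\ $\varphi \pOrder \varphi' \implies \varphi \subseteq \varphi'$. Any chain $A$ in $(\Phi_S, \pOrder)$ is by definition a totally $\pOrder$-ordered collection of specifications, so enumerating its elements in ascending $\pOrder$-order yields a sequence $\varphi_1 \pOrder \varphi_2 \pOrder \cdots \pOrder \varphi_n$ which, by subset-compatibility, satisfies $j \leq k \implies \varphi_j \subseteq \varphi_k$. These are precisely the hypotheses of Theorem~\ref{thm:max_f_in_chain}.

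Next I would identify $\downarrow(A)$ with the first term $\varphi_1$ of this enumeration: since $A$ is a chain in the finite (bounded) lattice $\Phi$, it is totally ordered and finite, so a unique $\pOrder$-minimum exists and coincides with $\varphi_1$. Applying Theorem~\ref{thm:max_f_in_chain} to the enumeration then gives
\begin{equation}
\max_{\varphi \in A} J_{|S|}(\widetilde{\varphi}) \;=\; J_{|S|}(\widetilde{\varphi}_1) \;=\; J_{|S|}(\widetilde{\downarrow(A)}),
\end{equation}
which is exactly the claim.

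I do not anticipate any substantive obstacle — the work has already been done in Lemmas~\ref{lemma:G_monotone} and~\ref{lemma:local_minima} and packaged into Theorem~\ref{thm:max_f_in_chain}. The only mild subtlety is confirming that the $\pOrder$-minimum of a finite chain exists and that monotonicity of $\widetilde{(\cdot)}$ transfers from $\subseteq$-ordered sequences to $\pOrder$-ordered ones, both of which follow immediately from the defining property $\pOrder \,\subseteq\, \subseteq$ and the finiteness of $\Phi$.
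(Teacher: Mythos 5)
Your proposal is correct and matches the paper's treatment: the paper offers no separate proof, simply presenting the corollary as Theorem~\ref{thm:max_f_in_chain} ``recast in the parlance of chains,'' which is exactly the reduction you carry out (enumerate the chain in ascending $\pOrder$-order, note $\pOrder$ respects $\subseteq$, and identify $\downarrow(A)$ with the first element). Your added care about the existence of the minimum in a finite chain is a harmless and reasonable elaboration.
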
  
\end{mdframed}

Observe that if the lattice, $(\Phi, \pOrder)$ is itself a chain, then
there are at most $|\Demos| + 1$ non-empty demonstration partitions,
$\Phi_S$. In fact, the non-empty partitions can be re-indexed by the
cardinality of $S$, e.g., $\Phi_S \mapsto \Phi_{|S|}$.  Further, note
that since chains are totally ordered, the smallest element of each non-empty
partition can be found by performing a binary search (indicated by find\_smallest below). These insights
are combined into Algorithm~\ref{alg:infer_chains} with a
relativized run-time analysis given in Thm~\ref{thm:runtime1}.
\begin{algorithm}[h]
  \caption{{Inference on chains} \label{alg:infer_chains}}
  \begin{algorithmic}[1]
    \Procedure{chain\_inference}{$\Demos, (A, \pOrder)$}
    \State $\Psi \gets \bigg \{(i, \text{find\_smallest}(A,
    i))~\big |~i \in \setof{0, 1, \ldots, |\Demos|} \bigg\}$\Comment{$O(|T_{\text{data}}|\Demos|\ln(|A|))$.}
    \State \Return  $\displaystyle i, \varphi^* \gets \argmax_{i, \varphi \in
      \Psi} J_i(\widetilde{\varphi})$\Comment{$O(T_{\text{rand}}|\Demos|)$}
    \EndProcedure
  \end{algorithmic}
\end{algorithm}

\begin{mdframed}[style=2, everyline=true]
\begin{theorem}\label{thm:runtime1}
  Let $T_{\text{data}}$ and $T_{\text{rand}}$ respectively
  represent the worst case execution time of computing
  $\overline{\varphi}$ and $\widetilde{\varphi}$ for $\varphi$ in chain $A$.
  Given demonstrations $\Demos$,
  Alg~\ref{alg:infer_chains} runs in time:
  \begin{equation}\label{eq:run_time1}
    O\bigg(|\Demos|\Big(T_{\text{data}}\ln(|A|) + T_{\text{rand}}\Big)\bigg)
  \end{equation}
\end{theorem}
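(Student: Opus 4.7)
The plan is to split the analysis into a correctness argument (so that the \textsc{argmax} in line 3 ranges over a set that actually contains an optimizer of~\eqref{eq:max_kl} over $A$) and a line-by-line runtime bound.

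For correctness, I would first observe that because $A$ is a chain under $\pOrder$ (which refines $\subseteq$), Lemma~\ref{lemma:G_monotone} implies that $N_\varphi = |\varphi \cap \Demos|$ is monotonically non-decreasing as one walks up $A$. Consequently, for each $i \in \{0,1,\ldots,|\Demos|\}$, the preimage $\{\varphi \in A : N_\varphi = i\}$ is a contiguous sub-chain of $A$, and \emph{find\_smallest}$(A,i)$ returns its least element (or none, if the preimage is empty). These sub-chains partition $A$, and they coincide with the non-empty restrictions $A \cap \Phi_S$ for $|S|=i$. Applying Corollary~\ref{cor:max_f_in_chain_partition} to each such sub-chain shows that $\max_{\varphi \in A \cap \Phi_S} J_{|S|}(\widetilde{\varphi})$ is attained at the sub-chain's least element. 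Hence the $|\Demos|+1$ representatives stored in $\Psi$ suffice to witness the global maximum, and line 3 solves~\eqref{eq:max_kl} restricted to $A$.

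For the runtime, I would separately account for lines 2 and 3. Line 2 issues at most $|\Demos|+1$ calls to \emph{find\_smallest}. Because $N_\varphi$ is monotone along the chain, each call can be implemented as a standard binary search: $O(\ln|A|)$ probes, each requiring a single evaluation of $N_\varphi$ (equivalently, $\overline{\varphi}$) at cost at most $T_{\text{data}}$. This gives $O\!\big(|\Demos|\,T_{\text{data}}\ln|A|\big)$ in total. Line 3 evaluates $J_i(\widetilde{\varphi})$ on at most $|\Demos|+1$ pairs in $\Psi$; each evaluation calls the model-counting oracle once at cost at most $T_{\text{rand}}$, while the pointwise evaluation of $J_i$ is $O(1)$. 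This contributes $O(|\Demos|\,T_{\text{rand}})$. Summing the two lines yields the claimed bound~\eqref{eq:run_time1}.

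The main subtlety, and the only step that is not a routine accounting, is the justification that a binary search is admissible inside \emph{find\_smallest}. This hinges entirely on Lemma~\ref{lemma:G_monotone}: the comparator used by the binary search is $N_\varphi$, and it must be non-decreasing along the chain for the search to be correct. A useful side remark is that the binary-search probes only require $\overline{\varphi}$ (cheap, cost $T_{\text{data}}$), while the costlier quantity $\widetilde{\varphi}$ (cost $T_{\text{rand}}$) need only be queried on the $O(|\Demos|)$ surviving candidates; this asymmetry is precisely what prevents a multiplicative $\ln|A|$ factor from attaching to $T_{\text{rand}}$ in~\eqref{eq:run_time1}.
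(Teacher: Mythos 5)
Your proposal is correct and follows essentially the same route as the paper's proof sketch: $|\Demos|+1$ binary searches at $O(T_{\text{data}}\ln|A|)$ each for line 2, plus one $\widetilde{\varphi}$ query per surviving representative for line 3. The additional correctness discussion (monotonicity of $N_\varphi$ via Lemma~\ref{lemma:G_monotone} justifying the binary search, and Corollary~\ref{cor:max_f_in_chain_partition} justifying that the representatives suffice) is a welcome elaboration of what the paper leaves implicit, but it does not change the argument.
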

\begin{proof}[Proof Sketch] A binary search over $|A|$ elements takes $\ln(|A|)$ time. There are $|\Demos|$ binary searches required to find the smallest element of each partition. Finally, for each smallest element, a single random
  satisfaction query is made.
\end{proof}  
\end{mdframed}

\noindent
\mypara{Lattice inference}
Of course, in general, $(\Phi, \pOrder)$ is not a chain, but a
complicated lattice. Nevertheless, observe that any path from $\false$
to $\true$ is a chain. Further, the smallest element of each partition
must either be the same specification or incomparable in $(\Phi,
\pOrder)$. That is, for each $k \in \{0, 1, \ldots |\Demos|\}$, the
set:
\begin{equation}\label{eq:demo_antichains}
  B_k \eqdef \setof{\downarrow(\Phi_S)~:~S \in {\Demos \choose k}}
\end{equation}
is a maximal anti-chain. Thus,
Corollary~\ref{cor:max_f_in_chain_partition} can be extended to:
\vspace{-2px}
\begin{mdframed}
    \begin{corollary}\label{cor:max_f_in_lattice}
    Given a lattice $(\Phi, \pOrder)$ and demonstrations $\Demos$:
    \begin{equation}
      \max_{\varphi \in \Phi} J_{N_\varphi}(\widetilde{\varphi}) =
      \max_{k \in {0, 1, \ldots, |X|}} \max_{\varphi \in B_k}J_{k}(\widetilde{\varphi})
    \end{equation}
  \end{corollary}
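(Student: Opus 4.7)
The plan is to derive the equality by partitioning the lattice $\Phi$ according to how each specification classifies the observed demonstrations, and then invoking Corollary~\ref{cor:max_f_in_chain_partition} (which itself rests on Lemmas~\ref{lemma:G_monotone} and~\ref{lemma:local_minima}) on each piece of the partition.

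First I would note that $\{\Phi_S\}_{S \subseteq \Demos}$ is a partition of $\Phi$, since every $\varphi \in \Phi$ has a uniquely determined intersection with $\Demos$. Crucially, $N_\varphi = |\varphi \cap \Demos| = |S|$ is \emph{constant} as $\varphi$ ranges over $\Phi_S$, so
\begin{equation*}
  \max_{\varphi \in \Phi} J_{N_\varphi}(\widetilde{\varphi}) \;=\; \max_{S \subseteq \Demos}\, \max_{\varphi \in \Phi_S} J_{|S|}(\widetilde{\varphi}),
\end{equation*}
with empty partitions $\Phi_S = \emptyset$ contributing nothing and discarded.

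Next I would show that for each non-empty $\Phi_S$, the inner maximum is attained at $\downarrow(\Phi_S)$. Pick any $\varphi \in \Phi_S$ and consider a chain in $(\Phi_S, \pOrder)$ ascending from $\downarrow(\Phi_S)$ to $\varphi$. By Lemma~\ref{lemma:G_monotone}, $\widetilde{\downarrow(\Phi_S)} \leq \widetilde{\varphi}$, and by Lemma~\ref{lemma:local_minima}, $J_{|S|}$ is monotonically decreasing, so $J_{|S|}(\widetilde{\varphi}) \leq J_{|S|}(\widetilde{\downarrow(\Phi_S)})$. This is precisely Corollary~\ref{cor:max_f_in_chain_partition} applied to that chain, yielding $\max_{\varphi \in \Phi_S} J_{|S|}(\widetilde{\varphi}) = J_{|S|}(\widetilde{\downarrow(\Phi_S)})$. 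Finally, I would reindex the outer maximization by cardinality, $\max_{S \subseteq \Demos} = \max_{k \in \{0,\ldots,|\Demos|\}} \max_{S \in \binom{\Demos}{k}}$, and substitute the definition $B_k = \{\downarrow(\Phi_S) : S \in \binom{\Demos}{k}\}$ (using $J_{|S|} = J_k$ when $|S| = k$) to reach the right-hand side.

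The main obstacle is making precise what $\downarrow(\Phi_S)$ means when $\Phi_S$ has several incomparable minimal elements (its infimum in $(\Phi, \pOrder)$ need not lie in $\Phi_S$, because $\Phi_S$ is not closed under meets). The cleanest fix is to interpret $\downarrow(\Phi_S)$ as any minimal element of $\Phi_S$ minimizing $\widetilde{\varphi}$ (well-defined by finiteness of $\Phi$), or equivalently to let $B_k$ include \emph{all} minimal elements of each $\Phi_S$; either way, the monotonicity of $\widetilde{(\cdot)}$ on $\pOrder$-chains combined with the monotonicity of $J_{|S|}$ forces the inner maximum over $\Phi_S$ to be attained at some minimal element, which by construction belongs to $B_{|S|}$.
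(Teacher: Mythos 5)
Your proof is correct and follows essentially the route the paper intends (the paper gives no explicit proof, merely noting that Corollary~\ref{cor:max_f_in_chain_partition} ``can be extended''): partition $\Phi$ by $\varphi \cap \Demos$, observe that $N_\varphi$ is constant on each cell, push the maximum in each cell down to a minimal element via Lemmas~\ref{lemma:G_monotone} and~\ref{lemma:local_minima}, and reindex the outer maximum by $|S|$. Your remark that $\downarrow(\cdot)$ is only defined for chains while $\Phi_S$ may have several incomparable minimal elements identifies a genuine imprecision in the paper's formulation, and your fix --- letting $B_k$ collect all minimal elements of each $\Phi_S$ (or a minimal element minimizing $\widetilde{\varphi}$) --- is exactly what Algorithm~\ref{alg:infer_po} implicitly does by evaluating every node whose $N_\varphi$ strictly exceeds that of all of its direct predecessors.
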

\end{mdframed}
\vspace{-2px}
Recalling that $N_\varphi$ increases on paths from $\false$ to $\true$,
we arrive at the following simple algorithm which takes as input the
demonstrations and the lattice $\varphi$ encoded as a directed acyclic graph
rooted at $\false$. (i) Perform a breadth first traversal (BFT) of the
lattice $(\Phi, \pOrder)$ starting at $\false$ (ii) During
the traversal, if specification $\varphi$ has a larger $N_\varphi$ than
all of its direct predecessors, then check if it is more probable
than the best specification seen so far (if so, make it the most probable
specification seen so far). (iii) At the end of the traversal, return
the most probable specification. Pseudo code is provided in
Algorithm~\ref{alg:infer_po} with a run-time analysis given in Theorem~\ref{thm:runtime2}.

\begin{algorithm}[H]
  \caption{Inference on Partial Orders\label{alg:infer_po}}
  \begin{algorithmic}[1]
    \Procedure{partialorder\_inference}{$\Demos, (\Phi, \pOrder)$}
    \State $(\varphi^*, \text{best\_info\_gain}) \gets (\false, 0)$
    \For{$\varphi \text{ in breadth\_first\_traversal}((\Phi, \pOrder))$ }
    \State $\text{parents} \gets \text{direct\_predecessors}(\varphi)$
    \If{$\exists \varphi' \in \text{parents} ~.~ N_{\varphi'} = N_\varphi$}
    \State \textbf{continue}
    \EndIf
    \State $\text{info\_gain} \gets J_{N_\varphi}(\widetilde{\varphi})$
    \If{$\text{info\_gain} > \text{best\_info\_gain}$}
    \State $(\varphi^*, \text{best\_info\_gain}) \gets (\varphi, \text{info\_gain})$
    \EndIf
    \EndFor
    \State \Return $\varphi^*$ \EndProcedure
  \end{algorithmic}
\end{algorithm}
\begin{mdframed}
  \begin{theorem}\label{thm:runtime2}
    Let $(\Phi, \pOrder)$ be a bounded partial order encoded as a
    Directed Acyclic Graph (DAG), $G = (V, E)$, with vertices $V$ and
    edges $E$. Further, let $B$ denote the largest anti-chain in
    $\Phi$.  If $T_{\text{data}}$ and $T_{\text{rand}}$ respectively
    represent the worst case execution time of computing
    $\overline{\varphi}$ and $\widetilde{\varphi}$, then
    for demonstrations $\Demos$, Alg~\ref{alg:infer_po} runs in time:
    \begin{equation}\label{eq:run_time2}
      O \big (E + T_{\text{data}}\cdot V + T_{\text{rand}}\cdot|B||\Demos|\big )
    \end{equation}
  \end{theorem}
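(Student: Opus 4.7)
The plan is to decompose the runtime of Alg~\ref{alg:infer_po} into three additive components: (i) the graph traversal, (ii) the per-vertex $\overline{\varphi}$ (equivalently $N_\varphi$) computations used by the filter, and (iii) the $\widetilde{\varphi}$ computations. Standard breadth-first traversal, together with enumerating each vertex's direct predecessors, costs $O(V+E)$, since each vertex is visited once and $\sum_\varphi |\text{direct\_predecessors}(\varphi)| = |E|$. Since computing $N_\varphi$ happens once per visited vertex, this contributes $O(T_{\text{data}}\cdot V)$. The nontrivial task is to bound how often the expensive $\widetilde{\varphi}$ query fires.

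The key claim is that the predecessor-equality filter admits $\varphi$ if and only if $\varphi$ is minimal in its demonstration partition $\Phi_S$, where $S = \varphi \cap \Demos$. The \emph{only if} direction is immediate: a direct predecessor $\varphi' \pOrder \varphi$ with $N_{\varphi'} = N_\varphi$ must satisfy $\varphi' \cap \Demos = \varphi \cap \Demos = S$, placing $\varphi'$ in $\Phi_S$ strictly below $\varphi$. For the \emph{if} direction, suppose $\varphi$ is not minimal in $\Phi_S$; then some $\psi \in \Phi_S$ with $\psi \pOrder \varphi$ is reachable through a Hasse-diagram path $\psi = \psi_0 \pOrder \psi_1 \pOrder \cdots \pOrder \psi_m = \varphi$. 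By Lemma~\ref{lemma:G_monotone}, $N_{\psi_0} \leq N_{\psi_1} \leq \cdots \leq N_{\psi_m}$, and since the endpoints agree, every $N_{\psi_i}$ equals $N_\varphi$. In particular, $\psi_{m-1}$ is a direct predecessor of $\varphi$ with $N_{\psi_{m-1}} = N_\varphi$, so the filter rejects $\varphi$.

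Consequently, the $\widetilde{\varphi}$ queries occur precisely on the set $\bigcup_{k=0}^{|\Demos|} B_k$, where $B_k$ is the maximal anti-chain defined in \eqref{eq:demo_antichains}. Each $B_k$ is an anti-chain in $(\Phi, \pOrder)$, so $|B_k| \leq |B|$, and there are $|\Demos|+1$ such anti-chains, giving at most $(|\Demos|+1)|B|$ queries and an $O(T_{\text{rand}}\cdot |B|\cdot |\Demos|)$ contribution. Summing the three components yields~\eqref{eq:run_time2}.

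The step I expect to be the main obstacle is the characterization of the filter: one must exclude the possibility that $\varphi$ has some non-direct ancestor sharing its partition while no direct predecessor does. The argument above closes this gap by using monotonicity of $N_\varphi$ along subset-chains (a consequence of Lemma~\ref{lemma:G_monotone}) to force equality at every intermediate vertex of the Hasse path, which in turn makes the last step of that path a witnessing direct predecessor.
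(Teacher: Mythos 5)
Your proof is correct and takes essentially the same route as the paper's proof sketch: the same three-way decomposition into the $O(V+E)$ traversal, the $O(T_{\text{data}}\cdot V)$ per-vertex cost, and $\widetilde{\varphi}$ queries confined to the anti-chains $B_k$, with your monotonicity-along-Hasse-paths argument supplying the justification (which the paper only asserts) that the predecessor filter fires exactly on the partition-minimal elements, hence at most $|B||\Demos|$ times. One cosmetic slip: your ``if'' and ``only if'' labels are swapped relative to the implications you actually prove, but both directions of the characterization are established.
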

  \begin{proof}[Proof sketch]
    BFT takes $O(V + E)$. Further, for each node,
    $\overline{\varphi}$ is computed ($O(T_\text{data}\cdot V)$). Finally, for each node in each of the
    candidate anti-chains $B_k$, $\widetilde{\varphi}$ is computed.
    Since $|B|$ is the size of the largest anti-chain, this query
    happens no more than $|B||\Demos|$ times.
  \end{proof}
\end{mdframed}


\section{Experiments and Discussion}\label{sec:casestudies}
\mypara{Scenario}
Recall our introductory gridworld example Ex~\ref{ex:intro}.
Now imagine that the robot is pre-programmed to perform task the
``recharge and avoid lava'' task, but is unaware of the second
requirement, ``do not recharge when wet''. To signal this additional
constraint to the robot, the human operator provides the five
demonstrations shown in Fig~\ref{fig:gridworld}. We now
illustrate how learning specifications rather than Markovian rewards
enables the robot to safely compose the new constraint with
its existing knowledge to perform the joint task in a manner that
is robust to changes in the task. 

To begin, we assume the robot has access to the Boolean features: red
(lava tile), blue (water tile), brown (drying tile), and yellow
(recharge tile). Using these features, the robot has encoded the
``recharge and avoid lava'' task as: $H(\neg \text{red}) \wedge P(\text{yellow})$.

\thiswillnotshow{
\begin{figure}[h]
  \centering
  \includegraphics[height=1.5in]{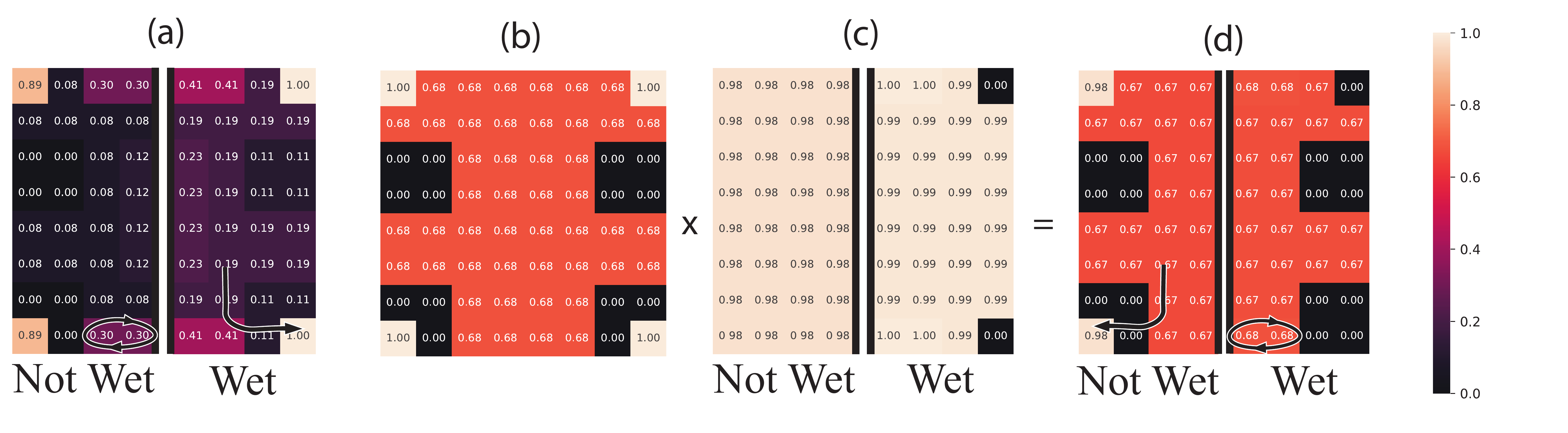}
  \caption{
    {\footnotesize {
    Learned rewards on the task discussed in Ex~\ref{ex:intro}. 
    (a) Rewards for the task YR \emph{and} BBY. Arrows indicate how agents who are trying to maximize reward would behave (left is a myopic agent; right is non-myopic agent).
    (b) Rewards for the sub-task YR.
    (c) Rewards for the sub-task BBY.
    (d) Rewards when YR and BBY are combined multiplicatively. 
    (a)-(c) Used MaxEnt IRL to infer rewards. In (a), (c), and (d) because
    the gridworld is symmetric, we use the left (resp. right) side to show the
    case where the robot is not wet (wet). Colors indicate magnitude of reward at that
  location, where lighter indicates higher reward.
  }}
    \label{fig:irl}
  }
\end{figure}

\emph{MaxEnt:} We began by applying MaxEnt on all of the features to
learn the monolithic task.  The resulting reward structure is shown in
Fig~\ref{fig:irl}a. However, as Fig~\ref{fig:irl}a illustrates, a
myopic agent is not incentivized to make progress towards the
recharging stations and a non-myopic agent will enter the lava to
reach the recharge station at the bottom. Next, we retrained on
subsets of the features to learn the sub-tasks (see Fig~\ref{fig:irl}b
and Fig~\ref{fig:irl}c). Observe that while the two reward
structures encode YR and BBY individually, it is not obvious how to
compose the rewards to jointly encode YR \emph{and} BBY.  For example,
just averaging makes the agent no longer avoid the lava. Taking the
minimum means that if the agent becomes wet it loses its incentive to
dry off or visit the yellow. Multiplication almost works, but suffers
from similar bugs as the monolithically learned reward (see Fig~\ref{fig:irl}d). 
}

\begin{minipage}{0.97\linewidth}
  \begin{minipage}{0.5\linewidth}
    \mypara{Concept Class} We designed the robot's concept class to be
    the conjunction of the known requirements and a specification
    generated by the grammar on the right.  The motivation in choosing
    this grammar was that (i) it generates a moderately large
    concept class (930 possible specifications after pruning trivially false specifications), and (ii) it contains
    several interesting alternative tasks such as
    $H(\text{red} \implies (\neg\text{brown}~S~\text{blue}))$, which semantically translates to: ``the robot should be wet before
entering lava''. To generate
  \end{minipage}
  \hfill
  \fbox{
    \begin{minipage}{0.45\linewidth}
      \emph{Concept Class Grammar:}
    \begin{bnf*}
      \bnfprod{$\phi$}
      {\bnfpn{H\bnfsp$\psi$} \bnfor \bnfpn{P\bnfsp$\psi$}}\\
      \bnfprod{$\psi$}
      {\bnfpn{$\beta$} \bnfor \bnfpn{$\beta$}
        \bnfts{$\implies$} \bnfpn{$\beta$}}\\
      \bnfprod{$\beta$}
      {\bnfpn{$\alpha$} \bnfor \bnfpn{$\alpha$}
        \bnfsp\bnfts{$\wedge$}\bnfsp\bnfpn{$\alpha$} \bnfor \bnfpn{$\alpha$}
        \bnfsp\bnfts{S}\bnfsp \bnfpn{$\alpha$}}\\
      \bnfprod{$\alpha$}
      {\bnftd{AP} \bnfor \neg\bnftd{AP}}\\
      \bnfprod{AP}
      { \bnftd{yellow} \bnfor \bnftd{red} \bnfor \bnftd{brown} \bnfor \bnftd{blue}}
    \end{bnf*}      
  \end{minipage}
  }
\end{minipage}

\vspace{-3px}
 the edges in Hasse diagram, we unrolled
the formula into their corresponding Boolean formula and used a SAT
solver to determine subset relations. While potentially slow, we make
three observations regarding this process: (i) the process was
trivially parallelizable (ii) so long as the atomic predicates remain
the same, this Hasse diagram need not be recomputed since it is
otherwise invariant to the dynamics (iii) most of the edges in the
resulting diagram could have been syntactically identified 
using well known identities on temporal logic formula.

\mypara{Computing $\widetilde{\varphi}$} To perform random satisfaction
rate queries, $\widetilde{\varphi}$, we first ran Monte Carlo to get a
coarse estimate and  we symbolically encoded the
dynamics, color sensor, and specification into a Binary Decision
Diagram to get exact values. This data structure serves as an incredibly succinct encoding
of the specification aware unrolling of the dynamics, which in practice
avoids the exponential blow up suggested by the curse of history.
We then counted the number of satisfying assignments and divided by
the total possible number of satisfying assignments.\footnote{One can
add probabilities to transitions by adding to transition constraints
additional fresh variables such that the number of satisfying
assignments is proportional to the probability.  } On average in these
candidate pools, each query took $0.4$ seconds with a standard
deviation of $0.32$ seconds.

\mypara{Results}
Running a fairly unoptimized implementation of
Algorithm~\ref{alg:infer_po} on the concept class and demonstrations
took approximately 95 seconds and resulted in 172 $\widetilde{\varphi}$
queries ($\approx 18\%$ of the concept class).  The inferred
additional requirement was $H((\text{yellow} \wedge P~\text{blue})
\implies (\neg \text{blue}~S~\text{brown}))$ which exactly captures
the do not recharge while wet constraint.  Compared to a brute force
search over the concept class, our algorithm offered an approximately
5.5 fold improvement. Crucially, there exists controllable
trajectories satisfying the joint specification:
\begin{equation} \bigg(H \neg \text{red} \wedge P~\text{yellow}\bigg)
\wedge H\bigg((\text{yellow} \wedge P~\text{blue}) \implies (\neg
\text{blue}~S~\text{brown})\bigg).
\end{equation}  Thus, a specification optimizing agent \emph{must}
jointly perform both tasks. This holds true even under task changes
such as that in Fig~\ref{fig:brittle}. Further, observe that it was
fairly painless to incorporate the previously known recharge while
avoiding lava constraints. Thus, in contrast to quantitative Markovian
rewards, learning Boolean specifications enabled encoding
\textit{compositional} \textit{temporal specifications} that are
\textit{robust} to changes in the environment.


\section{Conclusion and Future work}
\vspace{-10px}
Motivated by the problem of compositionally learning from
demonstrations, we developed a technique for learning binary
non-Markovian rewards, which we referred to as \emph{specifications}.
Because of their limited structure, specifications enabled first
learning sub-specifications for subtasks and then later creating a
composite specifications that encodes the larger task.  To learn these
specifications from demonstrations, we applied the principle of
maximum entropy to derive a novel model for the likelihood of a
specification given the demonstrations. We then developed an algorithm
to efficiently search for the most probable specification in a
candidate pool of specifications in which some subset relations
between specifications are known. Finally, in our experiment, we gave
a concrete instance where using traditional learning composite reward
functions is non-obvious and error-prone, but inferring specifications
enables trivial composition. Future work includes extending the
formalism to infinite horizon specifications, continuous dynamics,
characterizing the optimal set of teacher demonstrations under our
posterior model~\cite{teachingCphs}, efficiently marginalizing over the whole concept
class and exploring alternative data driven methods for generating
concept classes.


{
  \footnotesize
  \mypara{Acknowledgments}
  We would like to thank the anonymous referees as well as Daniel
Fremont, Markus Rabe, Ben Caulfield, Marissa Ramirez Zweiger, Shromona
Ghosh, Gil Lederman, Tommaso Dreossi, Anca Dragan, and Natarajan
Shankar for their useful suggestions and feedback.  The work of the
authors on this paper was funded in part by the US National Science
Foundation (NSF) under award numbers CNS-1750009, CNS-1740079,
CNS-1545126 (VeHICaL), the DARPA BRASS program under agreement number
FA8750--16--C0043, the DARPA Assured Autonomy program, by Toyota under
the iCyPhy center and the US ARL Cooperative Agreement
W911NF-17-2-0196.
}

\bibliography{refs}
\bibliographystyle{abbrv}

\end{document}



\maketitle
Claim:
\begin{empheq}[box=\fbox]{align}\label{eq:max_ent_dist2}
  \Prob(\vec{\xi}~|~M, \varphi)=\widehat{\vec{\xi}}\cdot
  \begin{cases}
    \overline{\varphi}/\widehat{\varphi} & \xi \in \varphi\\
    \overline{\neg \varphi}/\widehat{\neg \varphi} & \xi \notin
    \varphi
  \end{cases}
\end{empheq}
where, in general, we will use a bar over a variable, $\overline{(\cdot)}$, to indicate
average w.r.t. $X$ and a hat, $\widehat{(\cdot)}$, to denote expected value under random
actions. Thus, for example, $\widehat{\vec{\xi}}$ is the probability
of generating trace $\xi$ using the (uniformly) random action policy
and by the law of the excluded middle:
$\widehat{\neg \varphi} = 1 - \widehat{\varphi}$ and
$\overline{\neg \varphi} = 1 - \overline{\varphi}$.
  \begin{proof}
    For brevity, let
    $\big [ w_{\vec{s}\times\vec{a}} \eqdef
    \prod_{i=0}^{\tau-1}\delta(s_i, a_i, s_{i+1}) \big ]$ and
    $\big [ W_\varphi \eqdef \sum_{\vec{\xi} \in \varphi}
    w_{\vec{\xi}} \big ]$. By assumption,

    \begin{equation}\label{eq:max_ent_dist} \Prob(\vec{\xi}=(\vec{s}, \vec{a})~|~M, \varphi)
  = \frac{\exp(\lambda_\varphi \varphi(\xi))}{Z_\varphi}\prod_{i=0}^{\tau-1}\Prob(s_{i+1}~|~s_i, a_i, M)
\end{equation}
where $\vec{s}$ and $\vec{a}$ are the sequences and actions of $\xi$ respectively and $\lambda_\varphi$, $Z_\varphi$ are normalization factors such that $\Expected_\xi[\varphi] = \overline{\varphi}$ and $\displaystyle \sum_\xi \Prob(\xi~|~M,\varphi) =1$.
and thus,
    \begin{equation}
      Z_\varphi\cdot \overline{\varphi} = \sum_{\vec{\xi} \in \varphi} \varphi(\vec{\xi})e^{\lambda_\varphi} w_\xi
      = e^{\lambda_\varphi}W_\varphi;~~~Z_\varphi = e^{\lambda_\varphi}\sum_{\vec{\xi} \in
        \varphi} w_{\vec{\xi}} + e^0 \sum_{\vec{\xi} \notin \varphi} w_{\vec{\xi}} = e^{\lambda_\varphi} W_\varphi + W_{\neg \varphi}
    \end{equation}
    Combining gives
    $Z_\varphi = W_{\neg \varphi}/\overline{\neg \varphi}$. Next,
    observe that if $\vec{\xi} \not \in \varphi$, then
    $e^{\lambda_\varphi \varphi(\xi)} = 1$ and,
    \begin{equation} \Prob(\vec{\xi}~|~\varphi, M, \vec{x} \notin
      \varphi) = w_\xi(\overline{\neg \varphi}/W_{\neg \varphi})
    \end{equation}
    If $\vec{\xi} \in \varphi$ (implying $W_\varphi \neq 0$) then
    $e^{\lambda_\varphi} = Z_\varphi (\overline{\varphi} / W_\varphi)$ and
    thus,
    \begin{equation} \Prob(\vec{\xi}~|~\varphi, M, \vec{\xi} \in
      \varphi) = w_\xi (\overline{\varphi}/W_\varphi)
    \end{equation}
    Finally, observe that
    $\tilde{\varphi} = \frac{W_\varphi}{W_{\text{true}}}$ and
    $\tilde{\setof{\vec{\xi}}} =
    \frac{w_\vec{\xi}}{W_{\text{true}}}$. Substituting and factoring
    yields~\eqref{eq:max_ent_dist2}.
  \end{proof}

  \newpage

\begin{lemma}\label{lemma:G_monotone}
  $\forall \varphi', \varphi \in \Phi~.~\varphi' \subseteq \varphi$
  implies $\widehat{\varphi}' \leq \widehat\varphi$ and
  $\overline{\varphi}' \leq \overline{\varphi}$.
\end{lemma}
\begin{proof}
  Notice that: $N_{\phi'} \leq N_\phi$. Thus since
  $\overline{\varphi} \propto N_\varphi$,
  $\overline{\varphi}' \leq \overline{\varphi}$.

  Next, for brevity, again let
  $\big [ w_{\vec{s}\times\vec{a}} \eqdef
  \prod_{i=0}^{\tau-1}\delta(s_i, a_i, s_{i+1}) \big ]$ and
  $\big [ W_\varphi \eqdef \sum_{\vec{\xi} \in \varphi} w_{\vec{\xi}}
  \big ]$.

  Observing that
  $ \widehat{\phi} = (W_{\phi'} + \sum_{\vec{x}\in \phi \setminus
    \phi'} w_{\vec{x}})/ W_\top$ and $ w_{\vec{x}} \geq 0 $ it follows
  that $G_{\phi'} \leq G_\phi $.$\blacksquare$
\end{proof}

\begin{lemma}\label{lemma:local_minima}
  For all $i \in \setof{0, \ldots, |\Demos|}$, $f_i(x)$ is convex and
  $f_i(i / |\Demos|)$ is a global minimum.\footnote{Follows from
    the convexity of the KL Divergence on Bernoulli distributions.}
\end{lemma}
\begin{proof}
  In the following, take the $Beta$ and $\gamma$ terms in $f_i$ to be
  zero. This is w.l.o.g. as they only depend on $i$ which is assumed
  fixed.
  
  First, observe that if $x = i/ |x|$,
  $f_i(x) = 1$. Next, observe that:
  For $x \in (0, 1)$:
  \begin{equation} \delta_{x} f_i = f_i(x)\frac{|\Demos|x -
      i}{x(1 - x)}
  \end{equation}
  Further, notice  that $f_i(x)$ and $x(1 - x)$ are positive, and thus
  $\delta_{x} f_i \geq 0 \iff |\Demos|x - i \geq 0$.  

  Thus, the sign of $\delta_{x} f_i$ is determined
  by the sign of $x - i/|\Demos|$. Therefore, as $x$
  moves from less than $i/|\Demos|$ to larger than $i/|\Demos|$, $f_i(x)$
  decreases and then increases. Thus $x = i/|\Demos|$ is a local
  minimum. As the sign of the expression $x - i/|\Demos|$ can only
  change once. Thus, $f_i$ is convex and $f_i(i/|\Demos|)$ must be a global
  minimum. $\blacksquare$
\end{proof}

\mypara{Linear Temporal Logic}
Linear Temporal Logic extends propositional logic with temporal
operators: ``always''( denoted as $\square$), ``eventually'' (denoted
as $\diamond$), ``weak until'' (denoted as $W$), and ``next'' (denoted
as $X$). $\square \phi$ is true on a trace $x$ iff for every suffice
of $x$, $\phi$ holds. Similarly, $\diamond \phi$ is true on a trace
iff for every suffix, there exists a suffix where $\phi$ holds true.
$\phi W \psi$ holds iff $\phi$ is true until $\psi$ holds or $\phi$
always holds true. Finally, $X \phi$ is true iff $\phi$ is true on the
suffix starting at the next time step.
\newpage

\newpage
\bibliography{refs}
\bibliographystyle{abbrv}